\documentclass[preprint]{article}




    \usepackage[nonatbib]{neurips_2019}

\usepackage[utf8]{inputenc} 
\usepackage[T1]{fontenc}    
\usepackage{hyperref}       
\usepackage{url}            
\usepackage{booktabs}       
\usepackage{amsfonts}       
\usepackage{nicefrac}       
\usepackage{microtype}      
\usepackage[cmex10]{amsmath}
\usepackage{amsthm}
\usepackage{graphicx}
\graphicspath{{figs/}}
\usepackage[numbers]{natbib}
\usepackage{float}
\usepackage{footnote}
\makesavenoteenv{tabular}
\makesavenoteenv{table}

\usepackage[skip=0.8\baselineskip]{caption}


\title{Associative Convolutional Layers \thanks{All our implementations using Pytorch, alongside the datasets of our results (train accuracy, test accuracy, train loss, test loss, train time, and test time for all the epochs), trained models for Pytorch, and detailed documentations for our codes are available in the supplementary materials at \href{https://github.com/hamedomidvar/associativeconv}{github.com/hamedomidvar/associativeconv}.}}


\newtheorem{theorem*}{\textbf{Theorem}}
\newtheorem{proposition*}{\textbf{Proposition}}

\newtheorem{lemma*}{\textbf{Lemma}}
\newtheorem{conjecture*}{\textbf{Conjecture}}



\newtheorem{corollary*}{Corollary}

%
 
\author{%
  Hamed Omidvar\thanks{Department of Electrical and Computer Engineering, 
  UC San Diego,  La Jolla, CA 92093} \\
  University of California, San Diego \\
  \texttt{homidvar@ucsd.edu} \\
   \And
   Vahideh Akhlaghi \thanks{Department of Computer Science and Engineering, 
  UC San Diego,  La Jolla, CA 92093} \\
  University of California, San Diego \\
  \texttt{vakhlagh@ucsd.edu} \\
   \And
   Massimo Franceschetti \footnotemark[2] \\
     University of California, San Diego \\
  \texttt{mfranceschetti@ucsd.edu} \\
  \And
   Rajesh K. Gupta\footnotemark[3] \\
     University of California, San Diego \\
  \texttt{gupta@ucsd.edu} \\
}

\begin{document}

\maketitle 

\begin{abstract}
Motivated by the necessity for parameter efficiency in distributed machine learning and AI-enabled edge devices, we provide a general and easy to implement method for significantly reducing the number of parameters of Convolutional Neural Networks (CNNs), during both the training and inference phases. We introduce a simple auxiliary neural network which can generate the convolutional filters of any CNN architecture from a low dimensional latent space. This  auxiliary neural network, which we call “Convolutional Slice Generator” (CSG), is unique to the network and provides the association between its convolutional layers. During the training of the CNN, instead of training the filters of the convolutional layers, only the parameters of the CSG and their corresponding ``code vectors'' are trained. This results in a significant reduction of the number of parameters due to the fact that the CNN can be fully represented using only the parameters of the CSG, the code vectors, the fully connected layers, and the architecture of the CNN. To show the capability of our method, we apply it to ResNet and DenseNet architectures, using the CIFAR-10 and ImageNet-1000 datasets without any hyper-parameter tuning. Experiments show that our approach, even when applied to already compressed and efficient CNNs such as DenseNet-BC, significantly reduces the number of network parameters. In two models based on DenseNet-BC with $\approx 2\times$ reduction in one of them we had a slight improvement in accuracy and in another one, with $\approx 2\times$ reduction the change in accuracy is negligible. In case of ResNet-56, $\approx 2.5\times$ reduction leads to an accuracy loss within $1\%$. When applying this approach to ResNet-18 on ImageNet-1000 dataset, we achieved a top-1 error that is $1.7\%$ better than the original network while having $1.5\times$ reduction in the number of parameters. In case of ResNet-50, our approach reduces the number of parameters to less than the number of parameters of ResNet-18, namely by $\approx 1.7\times$, while the top-1 error degradation is less than $1\%$ compared to the original ResNet-50.

\end{abstract}

%
\section{Introduction}\label{Sec:Introduction}

\looseness -1
Current state-of-the-art Convolutional Neural Networks (CNNs) consist of hundreds or even thousands of convolutional layers \citep{he2016identity, pmlr-v80-xiao18a} and the resulting large number  of parameters presents a limit to their wider application.
More efficient implementations are desired, for training and inference phases of large CNNs running on the cloud. Also, on the other end of the spectrum,
as these networks proliferate to small embedded devices at the edge of the internet, and closer to observation and control in real-life applications, their size and implementation efficiency becomes critical. 
%


For the training of very large CNNs, for instance those running on cloud computing resources, distributed machine learning approaches are typically used. In this case, communication constraints present a key challenge, as gradients of the network parameters need to be communicated among different nodes \cite{wang2018atomo}. Federated learning is an example of distributed machine learning where a neural network is optimized and customized in a distributed manner using numerous users' edge devices \cite{konevcny2016federated}. 
Recent works have focused on coding, quantization, and compression techniques to reduce the amount of data that needs to be communicated among the different nodes~\citep{chen2018adacomp, wang2018atomo, lin2018deep}.

To improve the inference time of CNN models, especially on edge devices, recent studies propose various techniques such as pruning the network parameters and connections \citep{han2015learning,li2016pruning, anwar2017structured}.
Although these techniques overcome limited storage capacities in edge devices and reduce the number and cost of operations, they are only applicable to the models after the training phase. In addition, to recover the accuracy degradation resulting from these methods, extra training and fine-tuning is required.

Motivated by the challenges described above, in this paper  we focus on reducing the redundancy in the parameters describing the convolutional layers of CNNs and provide an approach which can be used in conjunction with all of the aforementioned solutions and is applicable  during both the training and inference phases.
Our contribution stems from the observation that although there has been considerable progress in more efficient implementation of neural networks,  large convolutional filters are always needed. Such convolutional filters are inherently redundant, to a point that pruning \cite{li2016pruning}, quantization \cite{hubara2017quantized,Flexpoint, bruna2013invariant},
and low rank approximations on these filters \cite{jaderberg2014speeding}
can be performed. This redundancy suggests that these layers could be represented in a much smaller space than their natural tensors space. Our approach is particularly relevant  in view of the recent trend of adding additional convolutional layers \cite{he2016identity, pmlr-v80-xiao18a} or adding additional filters to the convolutional layers \cite{zagoruyko2016wide} to achieve higher accuracy.

We provide a method to obtain a low-dimensional representation of the parameter space of the set of filters of convolutional layers during both the training and inference phases by introducing an auxiliary neural network that can be used alongside any CNN architecture. This auxiliary neural network generates slices of sets of convolutional filters and is called Convolutional Slice Generator (CSG).
The CSG takes as input a set of code vectors corresponding to a partition of a set of convolutional filters of each layer in a latent, low-dimensional space, and produces these slices, which are then combined and used as the set of convolutional filters of that layer. 

\looseness -1
The  code vectors, which lie in a space of cardinality  $\approx 20 \times$ smaller than the cardinality of the corresponding slice of the convolutional filter, are optimized during the training of the CNN instead of the set of filters. The auxiliary neural network can either be trained alongside the main network or be provided to the network in advance with pre-trained and fixed parameters. In our experiments on classification tasks, we show that while this approach significantly reduces the cardinality of the parameter space of the CNN, the resulting networks, except in extreme compression cases, still achieve top-1 accuracies that are within one percent of the original CNNs, or even achieve improved accuracies.  

Finally, one could argue that 
in this work we trade computation efficiency for parameter efficiency, and hence communication and storage efficiency. However, we also show that the added computational cost is negligible in practice, and with customized hardware for edge devices, our approach is also expected to improve timing performance.

\subsection{Related Works}
\looseness -1
There are several works, mostly in the intersection of signal processing and computer vision, that focus on the design of convolutional filters. For instance, in \cite{jacobsen2016structured} the authors, inspired by scattering networks \cite{sifre2013rotation, bruna2013invariant, mallat2012group}, introduce a structured method based on the family of Gaussian filters and its smooth derivatives, to produce the CNN filters from some basis functions that are also learned during the training phase. Steerable filter design  is another approach that has also been studied for about two decades   \cite{freeman1991design}. 

Closer to our approach is the design of low rank and separable filters. In this case, the main goal has been of achieving better computing performance \cite{tai2015convolutional, mamalet2012simplifying, jaderberg2014speeding}. For example, the work in \cite{rigamonti2013learning} shows that multiple image filters can be approximated by a shared set of separable (rank-1) filters, allowing large speedups with minimal loss in accuracy.
Other works have exploited the computing efficiency of Fast-Fourier-Transform (FFT) based multiplications \cite{abtahi2018accelerating, ding2017c}. These schemes require complex multiplications and efficient implementations of FFT. There are also methods based on the Winograd algorithm \cite{winograd1980arithmetic} for performing efficient convolutions in the real domain \cite{lavin2016fast}. 
All of these methods can be applied in conjunction with our approach to compress and accelerate the operations in the fully connected layer(s) or to accelerate the convolution operations. 


Additional works are concerned with methods to perform different stages of the training in parallel, or to reduce the amount of information that needs to be communicated between different nodes of the distributed computation network using compression, or quantizing the gradients \cite{wang2018atomo, lin2018deep, pmlr-v80-ye18a, li2014scaling, recht2011hogwild, NIPS2018_7405}. However, these works are not concerned with the architecture of the network or on how the filters are designed, and can be applied to any architecture including our CSG-augmented CNNs.

\subsection{Our Contribution}
We present three distinct contributions:
\begin{itemize}
    \item We provide a novel and general method for reducing the number of parameters that are needed to represent the sets of filters of convolutional layers during both the training and inference phases, through the use of an auxiliary neural network which transforms a set of code vectors in a low dimensional space to slices of sets of convolutional filters. The software implementation of our method is straightforward and it can be done by adding only a few lines of codes to the implementation of any CNN architecture. 
    \item We provide an example of a simple CSG-augmented CNN and show that the training time for this network is polynomial in the number of data points, number of input features (e.g., pixels), and inverse of the minimum distance between data points. In addition to this analysis, inspired by Discrete Cosine Transform (DCT)-based compression techniques for images, we provide an estimate on the relationship between the size of the slices and the cardinality of the code vector space which eliminates the need for tuning for these parameters. This analysis also suggests that our approach can be applied to at least a large set of architectures.
    \item We experimentally investigate the performance of our method by applying it to ResNet and DenseNet architectures, and show that significant parameter reductions, without compromising the accuracy, are possible. Furthermore, when running on a single GPU, we observe that the training time and the inference time of the augmented networks remain almost unaltered. 
\end{itemize}

The paper is organized as follows. In Section~\ref{Sec:Prelims} we provide the preliminaries and set the stage for introducing our method. In Section~\ref{Sec:CSG} we formally introduce the CSG, provide a rough estimate on the cardinality of the code vector space, and theoretically investigate its effect on the convergence of the training phase. In Section~\ref{Sec:Experiments} we provide the results of our experiments on ResNet and DenseNet architectures. 
Finally, Section~\ref{Sec:Conclusion} includes our concluding remarks and future directions.

\section{Preliminaries}
\label{Sec:Prelims}

\subsection{Convolutional Neural Network (CNN)}
In a typical classification task, a CNN is   composed of several convolutional layers and one or more fully connected layers, at the very end of the   network, responsible for the classification. Each convolutional layer consists of a set of filters (i.e., kernels) and perhaps some batch normalization layers and ReLu activations.  Here, we focus on the sets of filters of the convolutional layers.

Let $k\in \mathbb{R}^{s_1s_2s_3s_4}$, for $s_1,s_2,s_3,s_4 \in \mathbb{N}$, denote a set of $s_1$ filters in the CNN, where $s_2$ denotes the number of channels and $s_3$ and $s_4$ denote the height and width of the kernel, respectively. We denote the collection of all the sets of filters in a CNN by $\mathcal{K}$. Let $\mathcal{O}$ denote the set of all the other parameters in the CNN. We denote the set of all the parameters by $\mathcal{P} := \mathcal{K} \cup \mathcal{O}$.

\subsubsection{Slices}
We define a \textit{slice} as a tensor  $\hat{k} \in \mathbb{R}^{\hat{s}_1\hat{s}_2\hat{s}_3\hat{s}_4}$, for $\hat{s}_1,\hat{s}_2,\hat{s}_3,\hat{s}_4\in \mathbb{N}$. We partition each set of filters $k\in\mathcal{K}\setminus \{k_0\}$, where $k_0$ denotes the set of filters of the first convolutional layer, into $\lceil s_1/\hat{s}_1 \rceil \lceil s_2/\hat{s}_2 \rceil \lceil s_3/\hat{s}_3 \rceil \lceil s_4/\hat{s}_4 \rceil$ slices and denote the set of all these slices by $\hat{\mathcal{K}}$. For simplicity we assume that this partitioning is possible. \footnote{In practice we consider additional slices for fractional partitions and only use part of the final slice(s) to reconstruct the set of convolutional filters.}

\subsubsection{Code Vectors}
Let $c\in \mathbb{R}^{n_c}$, where $n_c\in \mathbb{N}$, denotes a vector of $n_c$ elements. We refer to $c$ as a \textit{code vector}. Each slice of each filter $\hat{k} \in \hat{\mathcal{K}}$ in the CNN corresponds to one code vector and their relationship is illustrated  in the following section.

\section{The Convolutional Slice Generator}
\label{Sec:CSG}

The Convolutional Slice Generator (CSG) is the core element of our approach. The CSG  provides a linear approximation for slices of a convolutional filter. 

\subsection{The CSG Network}

Let $vec(k)$ denote the vectorized version of a tensor $k$. Let $k_{i}$, for $i\in \{1,...,|\hat{\mathcal{K}}| \}$ denote a slice in $\hat{\mathcal{K}}$ and $c_i$ for $i\in \{1,...,|\hat{\mathcal{K}}| \}$ denote the code vector corresponding to the $i$'th slice in $\hat{\mathcal{K}}$ 
\begin{align}
vec(\hat{k}_{i}) = A_{CSG}c_{i} \mbox{, \ \ \  for \ \ \   } i\in \{1,...,|\hat{\mathcal{K}}|\},
\end{align}
where $A_{CSG}$ denotes an $\hat{s}_1\hat{s}_2\hat{s}_3\hat{s}_4$ by $n_c$ matrix representing the weights of the CSG network, $c_i$ denotes the code vector corresponding to the $i$'th slice where $i \in \{1,2,...,|\hat{\mathcal{K}}|\}$. See Fig.~\ref{Fig:CSG} for an example of how a single slice of a set of filters for a single convolutional layer is generated.

Let $\hat{\mathcal{G}}$ denote all the parameters of the CSG, i.e., the elements of the matrix $A_{CSG}$, $\hat{\mathcal{C}}$ denote the set of all the code vectors, and let $\hat{\mathcal{O}}$ denote all the parameters of the CNN except for the parameters in $\hat{\mathcal{G}}$ or $\hat{\mathcal{C}}$, e.g., biases, batch normalization parameters, fully connected layer(s), and the first convolutional filter. Hence, we can denote the set of all the parameters of the network by $\hat{\mathcal{P}} := \hat{\mathcal{C}} \cup \hat{\mathcal{G}} \cup  \hat{\mathcal{O}}$. 

\begin{figure}
  \centering
  \includegraphics[scale=.42]{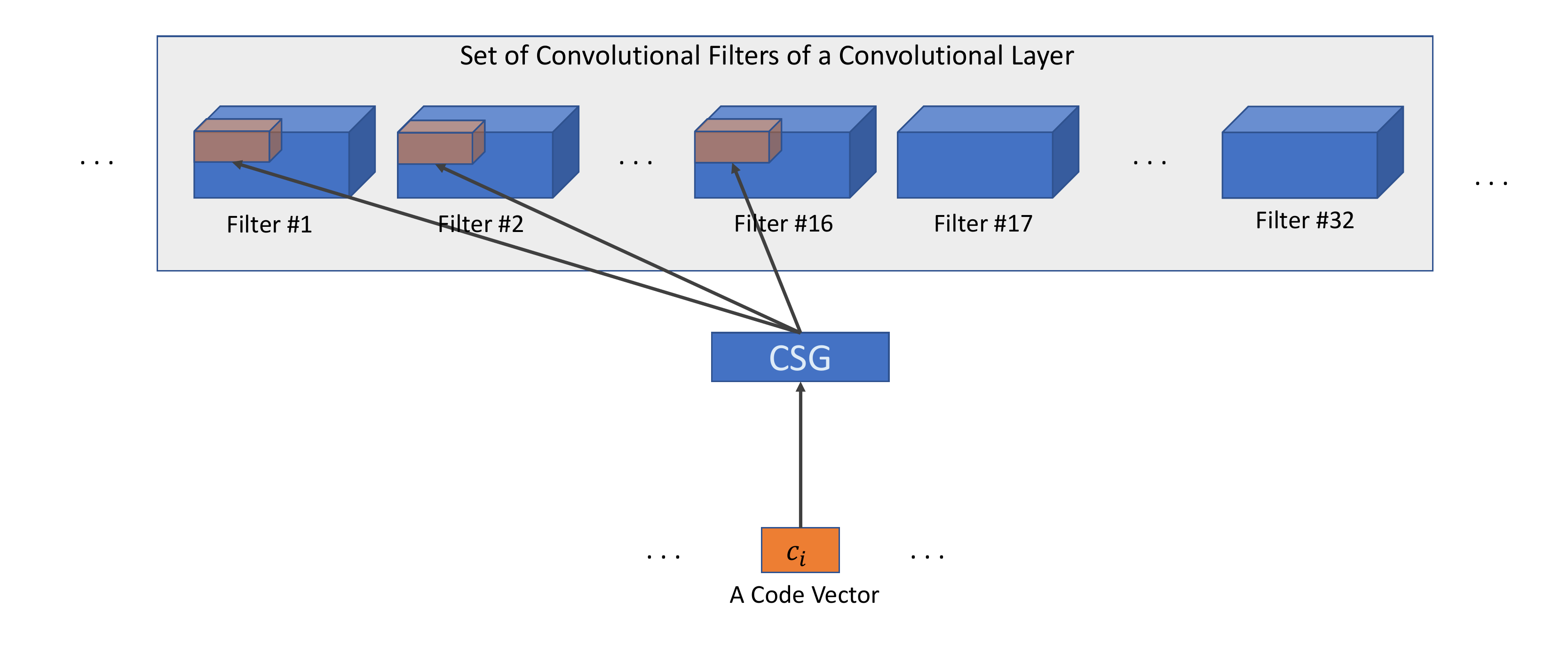}
  \caption{Generation of a single slice of a set of convolutional filters of a convolutional layer. Our method can be applied to any filter shape. In this example there are 32 filters, kernels are 6$\times$6 and have 32 channels. Each slice, generated by the CSG, is assumed to be $16\times 16 \times3\times 3$. The figure shows one slice, that spans across multiple channels and multiple filters, and its corresponding code vector. }\label{Fig:CSG}
\end{figure}

\subsection{Estimating the Cardinality of the Code Vector Space}\label{Sec:Estimate}
In this section, we discuss our method for having a very rough estimate on the cardinality of the code vector space $n_c$. First, we need to choose a shape for the slices. 
In order to decide about this shape, we considered several widely used CNNs including VGG16, VGG19, ResNet, etc., and concluded that a $3\times 3$ filter size is the most common size for the filters. Also, these architectures suggests that a slice with channel size of $16$ and the depth of $16$ would divide most of these filters. Hence, we chose $\hat{s}_1 = 16, \hat{s}_2 = 16, \hat{s}_3 = 3, \hat{s}_4 = 3$ for this part of our work. 

In order to decide about the cardinality of the vector space, we need an estimate on the number of the elements of the slice in its possible latent domain, namely an estimate for $n_c$. Inspired by the fact that these filters are responsible for detecting visual features and knowing that usage of DCT leads to a very good encoding of visual representations \cite{watson1994image}, we looked at the four-dimensional Type-II DCTs (4-D DCT-II) of about 29000 slices of pre-trained filters extracted from VGG-16, VGG-19, ResNet-50, InceptionV3, DenseNet-169, DenseNet-201, InceptionResNetV2 (available in Tensorflow). We then computed the 4-D DCT-II representation of these slices and removed the elements of this representation in such a way that the remaining elements would result in an inverse transform which is not very different from the original slice. Our analysis, presented in Appendix~\ref{App:Estimate}, suggests that a code vector that has close to $20\times$ fewer number of elements would be sufficient. In our experiments, we chose code vectors that have $18\times$ fewer elements than the slices, and our experiments on the neural networks confirm this choice.

\subsection{Training Convergence}
While  convergence is always observed in all our experiments,
in this section, we provide a proof of convergence for a simple CNN with only one convolutional layer based on the recent work \cite{allen2018convergence}. 

Let $m$ denote the number of channels of the input, and $d$ denote the number of its features (e.g., pixels). For simplicity, let us assume that the number of channels remains $m$ after the convolutional layer. Let $n$ denote the number of data points, and $d'$ denote the number of labels. We assume that the data-set is non-degenerate meaning that there does not exist similar inputs with dissimilar labels. We denote by $\delta$ the minimum distance between two training points. We restate the following theorem from \cite{allen2018convergence} for the CNN defined in Appendix~B of this reference.

\begin{theorem*}[CNN \cite{allen2018convergence}] \label{Thrm:cnn}
As long as $m\ge \Tilde{\Omega}(poly(n,d,\delta^{-1})d')$, with a probability that approaches one as $m\rightarrow{\infty}$, Stochastic Gradient Decent (SGD) finds an $\epsilon$-error solution for $l_2$ regression in $T=\Tilde{\Omega}\left( \frac{poly(n,d)}{\delta^2}\log\epsilon^{-1} \right)$ iterations for a CNN.
\end{theorem*}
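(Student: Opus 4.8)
The plan is to follow the over-parameterization convergence framework of \cite{allen2018convergence}, whose central idea is that a sufficiently wide network, when initialized randomly, stays throughout training inside a small neighborhood of its initialization in which the training objective behaves almost like a well-conditioned convex problem. First I would fix the random (Gaussian) initialization of the single convolutional layer's weights and write the network output on each of the $n$ training points as a function of the weights; the $l_2$ regression objective is then the sum of squared residuals over the data points. The goal is to control the gradient-descent trajectory of this non-convex objective using the width $m$ as the key lever.

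The core of the argument consists of three technical facts. First, a \emph{gradient lower bound}: at any weight configuration whose loss is not yet $\epsilon$-small, the squared gradient norm of the objective is at least proportional to the loss. This is where the non-degeneracy assumption enters. Since the minimum pairwise distance between data points is $\delta$, the inputs are sufficiently separated that the effective Gram (kernel) matrix induced by the nearly-frozen ReLU activation patterns is well-conditioned, with smallest eigenvalue on the order of $\delta$ after normalization; this produces the $\delta^{-2}$ dependence in $T$. Second, a \emph{semi-smoothness} property of the objective, which controls how much a single SGD step can inflate the first-order error term and thereby guarantees that each step makes genuine progress. Third, a \emph{neighborhood-invariance} claim: provided $m \ge \Tilde{\Omega}(poly(n,d,\delta^{-1})d')$, the weights never travel far from initialization, so only a vanishing fraction of neurons ever flip their ReLU sign, and the first two facts persist along the entire trajectory.

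With these in hand, the main argument is a coupling-plus-descent induction on the iteration count. I would show inductively that the iterates remain in the good neighborhood, that the loss contracts geometrically at a rate governed by the gradient lower bound, and that the cumulative weight movement stays within the allowed radius. Summing the geometric decrease then yields that $T = \Tilde{\Omega}\!\left( \frac{poly(n,d)}{\delta^2}\log \epsilon^{-1}\right)$ iterations suffice to reach $\epsilon$ error, while the high-probability statement follows from concentration of the random initialization as $m \to \infty$.

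The hard part will be establishing the gradient lower bound in the non-convex, over-parameterized setting: one must argue that, with high probability over the initialization \emph{and} uniformly along the trajectory, enough neurons remain active in a configuration that keeps the effective kernel non-degenerate. This requires careful probabilistic control of the activation patterns---bounding the number of sign flips as the weights move---and then lower-bounding the smallest singular value of the resulting feature matrix via the separation parameter $\delta$. Since the statement is quoted for the specific CNN of Appendix~B of \cite{allen2018convergence}, I would ultimately defer the full execution of these steps to that reference, and confine the work here to verifying that our one-layer convolutional setup, with $m$ channels and $d$ input features, satisfies its hypotheses.
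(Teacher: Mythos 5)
Your proposal is consistent with the paper, which does not prove this statement at all: it is restated verbatim from \cite{allen2018convergence} (for the CNN defined in Appendix~B of that reference), so the ``proof'' is simply deferral to that work. Your sketch accurately summarizes that reference's actual argument (gradient lower bound from the $\delta$-separation of the data, semi-smoothness, trajectory confinement near random initialization, and an induction giving geometric loss decay), and since you likewise defer the full execution to the reference, your approach is essentially the same as the paper's.
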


\looseness -1
The above theorem as discussed in \cite{allen2018convergence} can be easily extended for other convergence criteria including the cross-entropy. Now let us consider our CSG-augmented CNN which we denote by CNN-CSG. For simplicity, in the following theorem, we consider the case when only a single layer convolutional layer is present. 

\begin{theorem*}[CNN-CSG]\label{Thrm:cnn-CSG}
If $|\hat{\mathcal{C}}| \ge \Tilde{\Omega}(poly(n,d,\delta^{-1})d')$, with a probability that approaches one as $|\hat{\mathcal{C}}|\rightarrow{\infty}$, then SGD finds an $\epsilon$-error solution for $l_2$ regression in $T=\Tilde{\Omega}\left( \frac{poly(n,d)}{\delta^2}\log\epsilon^{-1} \right)$ iterations for a CNN-CSG.
\end{theorem*}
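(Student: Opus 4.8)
The plan is to reduce the statement to the CNN theorem of \cite{allen2018convergence} by exhibiting a single-layer CNN-CSG as an ordinary over-parametrized CNN whose set of filters has merely been re-parametrized through the generator. Writing $vec(\hat{k}_i)=A_{CSG}c_{i}$ for every slice $i\in\{1,\dots,|\hat{\mathcal{C}}|\}$, the network computes exactly the same function of its input as a CNN with $m$ output channels whose filters are the reconstructed slices; the only difference is that the trainable variables are now the shared matrix $A_{CSG}$ together with the code vectors $\{c_i\}$ rather than the filter entries themselves. I would therefore view the map $(A_{CSG},\{c_i\})\mapsto\{\hat{k}_i\}$ as a smooth re-parametrization and push the convergence guarantee of the preceding CNN theorem through it.

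First I would make the two over-parametrization hypotheses coincide. The number of slices into which the layer's filters are partitioned, and hence $|\hat{\mathcal{C}}|$, equals
\begin{align}
|\hat{\mathcal{C}}|=\left\lceil \tfrac{s_1}{\hat{s}_1}\right\rceil\left\lceil \tfrac{s_2}{\hat{s}_2}\right\rceil\left\lceil \tfrac{s_3}{\hat{s}_3}\right\rceil\left\lceil \tfrac{s_4}{\hat{s}_4}\right\rceil=\Theta(m^2),
\end{align}
which for the fixed slice shape $\hat{s}_1=\hat{s}_2=16,\ \hat{s}_3=\hat{s}_4=3$ and $s_1=s_2=m$ is a fixed polynomial in $m$ of degree two. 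Consequently $|\hat{\mathcal{C}}|\ge\Tilde{\Omega}(poly(n,d,\delta^{-1})d')$ forces $m\ge\Tilde{\Omega}(poly(n,d,\delta^{-1})d')$ for a (possibly different) polynomial, and $|\hat{\mathcal{C}}|\to\infty$ is equivalent to $m\to\infty$. This places us in exactly the regime where the CNN theorem applies, and since the iteration count $T$ depends only on $n,d,\delta,\epsilon$, it is inherited unchanged.

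The technical core, and the step I expect to be the main obstacle, is to verify that the factored parametrization preserves the two ingredients driving the proof in \cite{allen2018convergence}: the distribution of the filters at initialization, and the lower bound on the gradient (equivalently, the conditioning of the induced tangent kernel) throughout training. For the initialization, I would fix $A_{CSG}$ at a scaled random matrix and draw the $c_i$ as independent Gaussians, so that the slice entries are jointly Gaussian with covariance $A_{CSG}A_{CSG}^{\top}$, and argue that this covariance is well-conditioned on the $n_c$-dimensional column space of $A_{CSG}$ with high probability as $|\hat{\mathcal{C}}|$ grows. The genuine difficulty is that, because $n_c$ is smaller than $\hat{s}_1\hat{s}_2\hat{s}_3\hat{s}_4$, each slice is confined to this common column space, so the filters no longer range over all of $\mathbb{R}^{\hat{s}_1\hat{s}_2\hat{s}_3\hat{s}_4}$ and one cannot invoke the i.i.d. Gaussian initialization verbatim.

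To control the gradient lower bound I would exploit that both $A_{CSG}$ and the code vectors are trained: by the chain rule, for the loss $L$,
\begin{align}
\nabla_{c_i}L=A_{CSG}^{\top}\nabla_{\hat{k}_i}L \quad\text{and}\quad \nabla_{A_{CSG}}L=\sum_{i}\big(\nabla_{\hat{k}_i}L\big)\,c_i^{\top},
\end{align}
so the subspace of reachable filter perturbations is not frozen but rotates with $A_{CSG}$. The key lemma to establish is that, with $\Theta(m^2)$ independently initialized code vectors, the tangent kernel of the CNN-CSG is bounded below by a constant multiple of the tangent kernel of the un-factored CNN on the data, so that the per-step descent guarantee of \cite{allen2018convergence} survives the factorization with at most polynomial degradation of the constants. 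Granting this lemma, the remainder is a verbatim application of the CNN theorem, yielding an $\epsilon$-error solution for $l_2$ regression in $T=\Tilde{\Omega}\!\left(\frac{poly(n,d)}{\delta^2}\log\epsilon^{-1}\right)$ iterations with probability approaching one as $|\hat{\mathcal{C}}|\to\infty$.
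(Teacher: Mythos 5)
Your proposal and the paper's proof diverge at the crucial step, and your version has a genuine gap exactly where you suspect it does. Your plan treats the CSG as a smooth re-parametrization of the filters and then asks for a ``key lemma'': that the tangent kernel of the factored network dominates a constant multiple of the tangent kernel of the un-factored CNN, so that the descent analysis of \cite{allen2018convergence} survives. You never prove this lemma, and it is not a technicality --- it is the entire difficulty. As you yourself note, at initialization every reconstructed slice lies in the common $n_c$-dimensional column space of $A_{CSG}$, with $n_c$ roughly $20\times$ smaller than the ambient slice dimension, so the filter distribution is degenerate and the i.i.d.\ Gaussian initialization hypothesis of the CNN theorem fails outright. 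A kernel lower bound against the \emph{full} CNN's kernel is in fact implausible in the stated form: the CSG network's per-layer function class is confined to a rank-$n_c$ subspace of filter space, so there is no reason its kernel should dominate that of the unconstrained CNN; the comparison would at best hold against some restricted reference kernel, which is precisely what would have to be constructed and analyzed. Granting the lemma, the rest is indeed routine, but granting it is assuming the theorem's hard part.

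The paper avoids this obstacle entirely with an associativity (linearity-in-the-filter) trick that re-reads the architecture rather than re-parametrizing it: since $vec(\hat{k}_i)=A_{CSG}c_i$, convolving the input with $\hat{k}_i$ equals the $c_i$-weighted linear combination of convolutions of the input with the \emph{columns} of $A_{CSG}$ (each placed at the appropriate slice location with zero-padding). Under this reading, the columns of $A_{CSG}$ are the convolutional filters, and the code matrix $C=[c_1,\dots,c_{|\hat{\mathcal{C}}|}]$ is simply an additional fully connected layer inserted before the classifier. The CNN-CSG is then \emph{literally} an instance of a CNN-plus-fully-connected architecture covered by Theorem~\ref{Thrm:cnn}, with $|\hat{\mathcal{C}}|$ playing the role of the width $m$ (the weight count being polynomial in $|\hat{\mathcal{C}}|$, which your $\Theta(m^2)$ counting argument also establishes, and that part of your proposal is fine). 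No new initialization analysis and no kernel-comparison lemma are needed, because the randomness and over-parametrization requirements of the cited theorem are imposed on the re-read architecture directly. If you want to salvage your route, the lemma you would need is essentially equivalent to redoing the analysis of \cite{allen2018convergence} for a constrained filter manifold; the paper's one-line change of viewpoint is what makes the theorem cheap.
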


\looseness -1
The proof of the above theorem, which follows from the fact that the code vectors following the CSG layer can simply be viewed as an additional fully connected layer, can be found in Appendix~\ref{App:Proof2}. Similar to Theorem~\ref{Thrm:cnn},  Theorem~\ref{Thrm:cnn-CSG} can be easily extended for other convergence criteria including the cross-entropy. 

\section{Experiments} 
\label{Sec:Experiments}

\subsection{Setup}
\looseness -1
We evaluated our approach on three different CNN models (ResNet56, DenseNet-BC-40-48, DenseNet-BC-40-36) on CIFAR-10 dataset. CIFAR-10 includes 50K training images and 10K test images from 10 different classes. The CSGs are integrated into the models implemented in Pytorch. Our implementations along with detailed documentations of our codes are available in the supplementary materials.
For training the models, we used a machine with a single GPU (Nvidia Geforce 2080 Ti). It is worth mentioning that we did not do any parameter tuning for our CSG-augmented networks and the experiments are all done using the same settings that we used for the original networks. Also, as it is clear from the previous sections, we did not apply our method to the very first convolutional layer of any network.

\subsection{Training CSG alongside the CNN} 
\looseness -1
In this set of experiments we train all the models from scratch. We initialize the parameters of CSG $\hat{\mathcal{G}}$ with random initial values and train it alongside the code vectors $\hat{\mathcal{C}}$ as well as other parameters of the network $\hat{\mathcal{O}}$. 
\subsubsection{CIFAR-10 Dataset}

See Table~\ref{Tbl:Results} for a summary of the results. As we can see, when we used $[16,16,3,3]$ slices and code vectors of size $128$ for ResNet-56 \cite{he2016identity}, we achieved  $\approx 2.5\times$ reduction with less than $1\%$ increase in top-1 error. If we allow a higher accuracy degradation of $\approx 1.5\%$, we can achieve over $5.3\times$ parameter reduction by using $[12,12,3,3]$ slices and code vectors of size $72$. In case of DenseNet \cite{huang2017densely}, we considered the most challenging cases, namely, when bottlenecks are used and the network has a $50\%$ compression factor (i.e., $\theta=0.5$), which is abbreviated as DenseNet-BC. We only considered $3\times 3$ kernels and did not compress the bottleneck or transition layers in these implementations. Since the number of filters is a multiple of $12$, we chose slices of shape $[12,12,3,3]$ and code size of $72$ to keep the ratio between the number of elements in the slice and $n_c$ the same. We considered two cases when $L=40, K=48$, and $L=40, K=36$, where $L$ is the number of layers and $K$ is the growth rate. For the first case, we could achieve $\approx 2\times$ reduction with a slight improvement in accuracy. For the second case, the use of CSG had little effect on the accuracy of the network while reducing its parameters by over $1.8\times$.
\begin{table}
\vspace{-1mm}
  \caption{Training results on CIFAR-10 dataset. When CSG is used, the slice shape and the code vector size are indicated as CSG-$[\hat{s}_1,\hat{s}_2,\hat{s}_3,\hat{s}_4]$-$n_c$ following the name of the original network. In the ``Top-1 Err.'' column the average and standard deviations of test errors at the last epoch for three non-selective trainings and on the ``Ratio'' column the compression ratios with respect to the original networks are reported.}
\label{Tbl:Results}
  \centering
  \begin{tabular}{lccc}
    \toprule
    \cmidrule(r){1-2}
    Network Architecture  & $\#$ Param.     &  Top-1 Err. &  Ratio  \\
    \midrule
    DenseNet-BC-40-48 (Original) & 2,733,130  & 4.97 $\pm$ 0.26 & 1.00$\times$   \\
    DenseNet-BC-40-48-CSG-[12,12,3,3]-72  & 1,416,394  & 4.83 $\pm$ 0.24  & 1.92$\times$  \\
    
    DenseNet-BC-40-48-CSG-[12,12,3,3]-72  \\
     \ \ \ \ \  w/ Pre-trained CSG on DenseNet-BC-40-48 & 1,323,082  & 5.07 $\pm$ 0.11 & 2.06$\times$ \\
     DenseNet-BC-40-48-CSG-[12,12,3,3]-72\\
     \ \ \ \ \  w/ Pre-trained CSG on DenseNet-BC-40-36  & 1,323,082  & 5.14 $\pm$ 0.23 & 2.06$\times$ \\
    DenseNet-BC-40-48-CSG-[12,12,3,3]-72 \\
    \ \ \ \ \ w/ Compressed 1x1 Kernels & 904,906  & 5.62 $\pm$ 0.28  & 3.02$\times$  \\
    \midrule
    DenseNet-BC-40-36 (Original) & 1,542,682 & 5.38 $\pm$ 0.27  & 1.00$\times$  \\
    DenseNet-BC-40-36-CSG-[12,12,3,3]-72 & 842,842  & 5.12 $\pm$ 0.09 & 1.83$\times$ \\
    DenseNet-BC-40-36-CSG-[12,12,3,3]-72  \\
    \ \ \ \ \  w/ Pre-trained CSG on DenseNet-BC-40-48 & 749,530  & 5.61 $\pm$ 0.21  &  2.05$\times$\\
\midrule
\midrule
 ResNet-56 (Original) & 853,018  & 6.28 $\pm$ 0.20 & 1.00$\times$  \\
    ResNet-56-CSG-[16,16,3,3]-128     & 347,162 & 7.26 $\pm$ 0.19  & 2.45$\times$   \\
    ResNet-56-CSG-[12,12,3,3]-72     & 160,450 & 8.01 $\pm$ 0.27     & 5.31$\times$ \\
        ResNet-56-CSG-[16,16,3,3]-128 \\ 
    \ \ \ \ \ w/ Pre-trained CSG on ResNet-20     & 52,250       & 11.98 $\pm$ 0.28 & 16.3$\times$ \\    
    
    \bottomrule
  \end{tabular}
  \vspace{-4mm}
\end{table}

\begin{figure}
  \centering
  \includegraphics[width=0.98\textwidth]{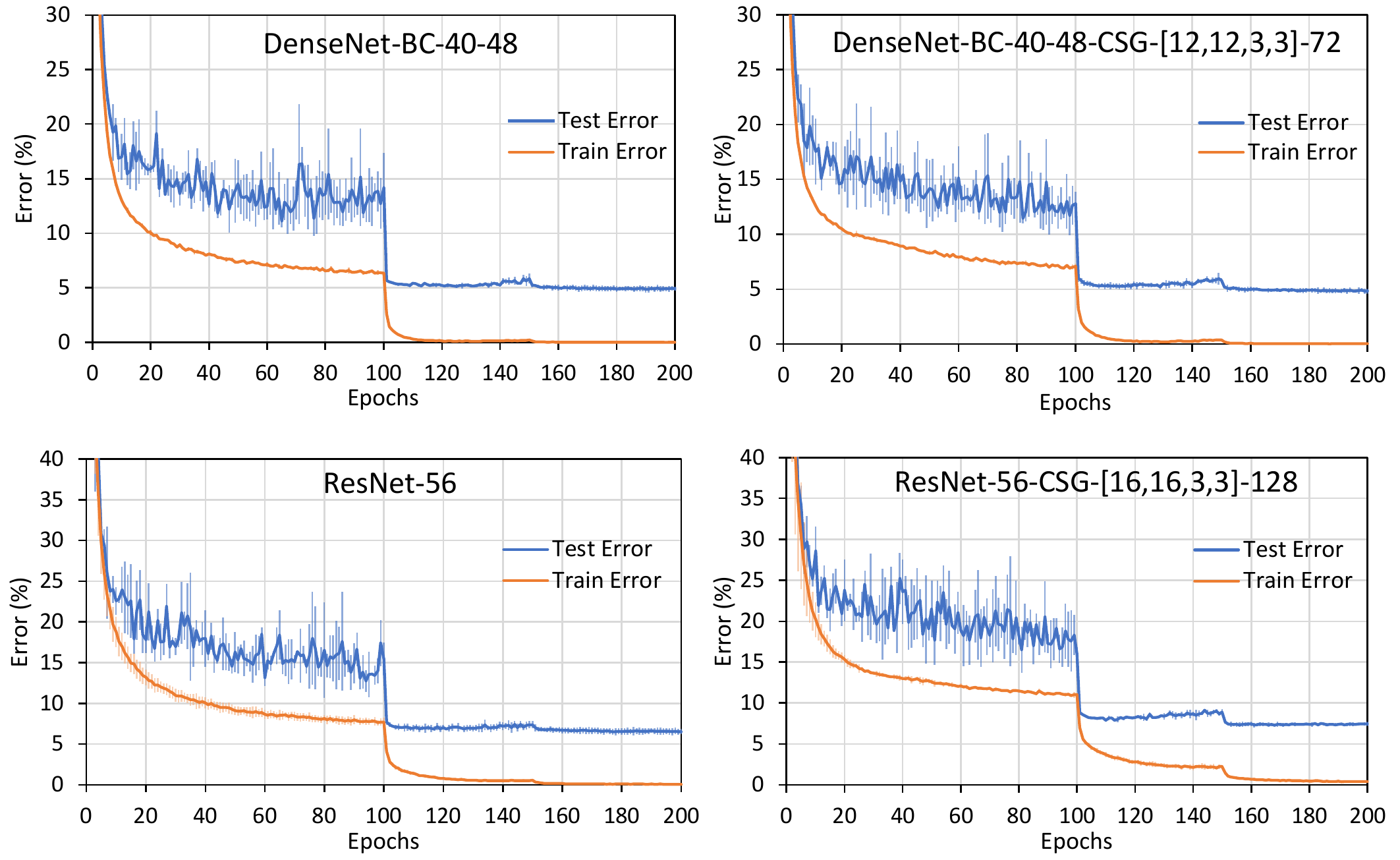} 
  \caption{Training and test error for DenseNet-BC-40-48 and ResNet-56 and their CSG-augmented versions on CIFAR-10 dataset over the course of 200 epochs. For the first 100 epochs the learning rate was set to $0.05$ and for the two final $50$ epochs it was set to $5 \times 10^{-3}$ and $5\times 10^{-4}$ respectively, and the batch size was $128$ for all DenseNet models and $192$ for all ResNet models.}\label{Fig:ResDense-errorbars}
  \vspace{-5mm}
\end{figure}

\begin{table}
  \caption{Training results on ImageNet-1000 (ILSVRC2012) dataset. When CSG is used, the slice shape and the code vector size are indicated as CSG-$[\hat{s}_1,\hat{s}_2,\hat{s}_3,\hat{s}_4]$-$n_c$ following the name of the original network. In the ``Top-1 Error'' column the validation error for the center cropped images at the last epoch for the training and on the ``Ratio'' column the compression ratios with respect to the original networks are reported. The results indicated with a "*" are reported from \cite{he2016deep}.}
\label{Tbl:ResultsImagenet}
  \centering
  \begin{tabular}{lccc}
    \toprule
    \cmidrule(r){1-2}
    Network Architecture  & $\#$ Param.     &  Top-1 Err. ($\%$) &  Ratio  \\
    \midrule
    ResNet-18 (Original) & 15,995,176  & 30.2\%*   & 1.00$\times$   \\
    ResNet-18-CSG-[16,16,3,3]-128  &  10,371,368   & \textbf{28.5}\%  & 1.54$\times$  \\
    \midrule
    ResNet-50 (Original) & 25,557,032  & 24.7\%*  & 1.00$\times$   \\
    ResNet-50-CSG-[16,16,3,3]-128  & 15,163,432  & 24.9\%  & 1.68$\times$  \\

    \bottomrule
  \end{tabular}
  \vspace{-4mm}
\end{table}

\begin{figure}
\vspace{-3mm}
  \centering
  \includegraphics[width=0.9\columnwidth]{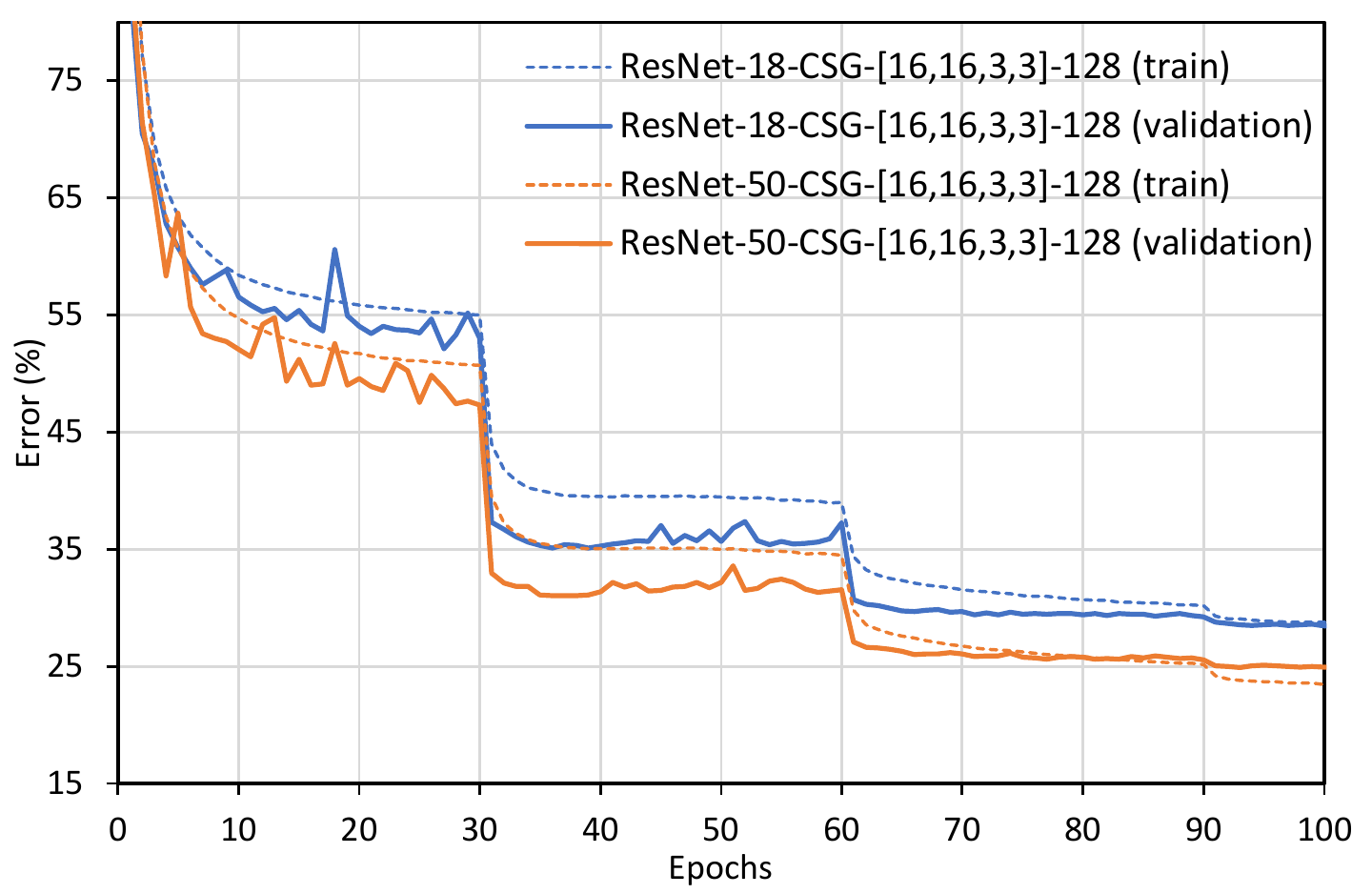} 
  \caption{Train and validation errors during the training of ResNet-18-CSG-[16,16,3,3]-128, and ResNet-50-CSG-[16,16,3,3]-128 on ImageNet dataset.}\label{Fig:resnet-imagenet}
  \vspace{-3mm}
\end{figure}


\subsubsection{ImageNet-1000 (ILSVRC2012) Dataset}
We have also trained the CSG-augmented versions of ResNet-18 and ResNet-50 on the ImageNet-1000 (ILSVRC2012) dataset. We used the same hyperparameters as the ones mentioned in the original paper \cite{he2016deep}, namely we used batch sizes of 256 images, and started from the learning rate of 0.1 and divided the learning rate by 10 every 30 epochs. We continued the training for 100 epochs which is 20 epochs fewer than the original paper. While ResNet-18-CSG-[16,16,3,3]-128 has a compression ratio of $1.54\times$, it achieves a top-1 error of $28.5 \%$ which is $1.7\%$ better than the implementation of the original ResNet-18 as reported in \cite{he2016deep}. ResNet-50-CSG-[16,16,3,3]-128 which has almost the same number of parameters as ResNet-18, achieves $24.9 \%$ top-1 error with a compression ration of $1.68\times$. The results are summarized in Table~\ref{Tbl:ResultsImagenet}. More details of training and validation errors over the course of 100 epochs are brough in Figure~\ref{Fig:resnet-imagenet}.

\subsection{Using Pre-Trained CSG}
\looseness -1
When using pre-trained CSG parameters during the training of the CSG-augmented CNNs, the number of parameters to be trained reduces to $|\hat{\mathcal{C}}|+|\hat{\mathcal{O}}|$. This can result in significant reduction in the number of the parameters of the network depending on its architecture. 
For ResNet-56, in the case of using fixed pre-trained parameters for the CSG that was trained alongside ResNet-20 architecture (in ResNet-20, due to the small size of the network, use of the CSG-augmented network does not result in parameter reduction, i.e., $\hat{\mathcal{G}}$ is larger than the number of parameters - training and test details of ResNet-20 are available in supplementary materials), the number of parameters reduces from about $850K$ to merely $50K$, a reduction of more than $16\times$ but at the cost of higher accuracy loss. For DenseNet-BC  when $L=40$,$K = 48$, our approach of using pre-trained CSG that was trained alongside DenseNet-BC with $L=40$, $K = 36$ reduces the number of parameters from $\approx 2.7$ million to $\approx 1.3$ million (i.e., $2.06\times$) while also improving the accuracy. For DenseNet-BC  when $L=40$, $K = 36$, this approach (using a pre-trained CSG obtained from DenseNet-BC with $L=40$, $K = 48$) reduces the number of parameters from $\approx 1.5$ million to $\approx 0.75$ million while having a degradation of less than $0.5\%$ in accuracy. 

\subsection{End-to-End Timings}
We evaluated the training and inference time of CSG-augmented CNNs on a single GPU for different CNN models, and compared it with the baseline ones. We measured the execution time of training and inference stages on the whole train and test datasets. The average epoch time for each network is summarized in Table \ref{Tbl:Timings}. The results show that for DenseNet, both inference time and training time are slightly improved in the CSG-augmented models compared to the baseline. For ResNet-56, execution times reported for the CSG-augmented model are slightly higher than the baseline model. 
The results indicate that although CSG is added to each convolutional layer in CSG-augmented CNNs, the execution time on a single GPU remains almost the same. The reason is that in CSG-augmented networks, due to the reduced number of parameters, costly memory accesses like DRAM accesses across memory hierarchy in a computing system are decreased; thus, the communication and memory accesses cost are reduced. Therefore, the cost of more computation performed in CSG-augmented CNN models (i.e., additional operations related to matrix multiplications in CSG) compared to the baseline models do not increase the execution time of the baseline model. This results in a slight improvement of timing in DenseNet models, however, in case of ResNet, due to its smaller size, the lower memory access cost does not fully compensate the additional cost due to the use of CSG.

\begin{table}
  \caption{Training time, inference time and model sizes (Mega Bytes (MB)) and host to device model data transfer. Training and inference time for each epoch are reported in the following format: ``mean'' $\pm$ ``standard deviation'' over all 200 epochs. Model size reports the size of the Pytorch model in MB. ``H to D'' column reports the amount of model related data transmitted from the host (main memory) to the device (GPU memory) collected using the `nvprof' tool.}
\label{Tbl:Timings}
  \centering
  \resizebox{\textwidth}{!}{
  \begin{tabular}{lcccc}
    \toprule
    \cmidrule(r){1-2}
    Network Architecture      &    Train Time & Test Time & Model Size & H to D \\
    \midrule
    ResNet-56 (Original)   & 14.51s $\pm$ 0.32s & 1.11s $\pm$ 0.02s & 3.33MB & 4.30MB \\
    ResNet-56-CSG-[16,16,3,3]-128  & 14.74s $\pm$ 0.27s  & 1.30s $\pm$ 0.03s & 1.40MB & 2.28MB \\
    ResNet-56-CSG-[12,12,3,3]-72  & 14.97s $\pm$ 0.29s   & 1.40s $\pm$ 0.06s & 0.67MB & 1.53MB \\ 
    \midrule
    DenseNet-BC-40-48 (Original)   & 68.30s $\pm$ 0.55s  & 4.40s $\pm$ 0.06s & 10.50MB & 11.97MB \\
    DenseNet-BC-40-48-CSG-[12,12,3,3]-72    & 67.53s $\pm$ 0.51s  & 4.33s $\pm$ 0.04s & 5.52MB & 6.69MB \\
    \midrule
    DenseNet-BC-40-36 (Original) & 51.87s $\pm$ 0.43s  & 3.33s $\pm$ 0.03s  & 6.34MB & 7.18MB \\
    DenseNet-BC-40-36-CSG-[12,12,3,3]-72   & 51.58s $\pm$ 0.40s & 3.31s $\pm$ 0.04s & 3.31MB & 4.38MB \\
    \bottomrule
  \end{tabular}}
  \vspace{-5mm}
 \end{table}

We expect the execution time to be improved considerably on specialized hardware architectures such as Application Specific Integrated Circuits (ASICs) and Field Programmable Gate Arrays (FPGAs), mostly used in edge devices, for the following reasons. First, in these hardware architectures, as shown in \cite{Diannao}, the required memory bandwidth for the model parameters are relatively high compared to other data such as input/output feature maps. In contrast, on GPU, because of data parallelization, the intermediate results such as feature maps for different input images consume a more significant part of the on-chip memories and off-chip memory bandwidth. 
Second, in specialized architectures designed for CNNs such as the ones introduced in \cite{eyeriss, Diannao}, the proposed mapping of the operations and data on the processing elements helps to increase the reusability of data and parameters loaded into the on-chip memories, which reduces the number of accesses to costly memories (i.e., off-chip memories). In these cases, with CSG-augmented CNNs, fewer number of parameters are loaded onto the on-chip memories such as global buffers in Eyeriss architecture \cite{eyeriss}, and their reuse distance is increased due to the fact that fewer number of parameters can be loaded onto on-chip memories to generate the same number of weights that must be loaded onto the on-chip memories in baseline models.

\looseness -1
In addition to evaluating the timing, we reported the size of the model related parameters that are transferred from the main memory of a computing machine (Host) to DRAM of the GPU (Device) (i.e., Host to Device (``H to D'' column in Table \ref{Tbl:Results})) for an inference task. The numbers, extracted from an Nvidia profiling tool (nvprof) show that with CSG-augmented CNNs, the communication between host and device memories to transfer the model is reduced by, on average, 2.03$\times$, compared to the baseline models.

\subsection{Comparison with Other Methods}
As mentioned before, our approach can be used on top of most of the available approaches. 
We have implemented two of the other compression methods mentioned in the above for DenseNet-BC-40-48: Separable filters \cite{rigamonti2013learning} and low rank filters using singular value decomposition (SVD) \cite{tai2015convolutional}. Separable convolutions approach reduced the accuracy by $\approx 1\%$ with $\approx 2 \times$ compression with roughly similar timings for training and inference on the GPU, 
while our approach slightly increases the accuracy with $\approx2\times$ compression. 
Parameter tuning for the low rank filters method (SVD-based) is needed and with moderate tuning, we have not been able to achieve better than $\approx 11\%$ top-1 error. Also, the training is $\approx 3\times$ slower due to need for computing the SVDs.
The number of trainable parameters is similar to the original model 
; however, after training, the decomposed parameters (reduced by $2\times$) are used for the inference. 
We will also provide a qualitative comparison between major approaches in a table in the final version which we could not include due to limited space.
 
\begin{table}[!htbp]
  \caption{Comparison of compression methods. }
\label{Tbl:Comparison}
  \centering
\resizebox{\textwidth}{!}{
  \begin{tabular}{lccccccccc}
    \toprule
    \cmidrule(r){1-2}
    Method & Implem. & Accuracy & Train Time & Inf. Time & Comp. Ratio & C. Layers & F.C. Layers & Dist. Learning \\ 
    \midrule
Ours & Easy & Almost Same & Almost Same & Faster (Mem. Access) & $\approx 2\times$ & Yes & No \footnote{Although we have not implemented this approach for approximating fully connected layers, the extension of our approach is straighforward and is left as a future direction.} & Yes \\
Low-rank & Difficult & Almost Same & Slower & Faster & $\approx 2\times$ & Yes & No & Yes \\
Separable & Easy & Slightly Degrades & Almost Same & Faster & $\approx 2\times$  & Yes & No & Yes \\
    \bottomrule
  \end{tabular}}

 \end{table}

\section{Conclusion and Future Directions}
\label{Sec:Conclusion}
Although several methods for making the convolutional layers of CNNs more efficient are used, the number of parameters of these layers constitute the most significant portion of the model parameters. In this work we focused on reducing the number of unnecessary parameters of convolutional layers by representing them in a low dimensional space through the use of a simple auxiliary neural network without significantly compromising the accuracy or tangibly adding to the processing burden.
There are still several directions that can be pursued. The use of this method for other tasks, especially other than vision related tasks, such as natural language processing, etc. needs to be assessed. The extension of the theoretical analysis to other more complicated architectures is an attractive future direction. The combination of this method with efficient computation, compression, and quantization methods mentioned in this paper for distributed machine learning and machine learning acceleration for edge devices are all worthwhile studies. Also the use of more than one CSG for different classes of filters or the use of non-linear and/or multi-layer CSGs should be investigated.


\small
\bibliographystyle{plainnat}
\bibliography{refs}

\begin{thebibliography}{37}
\providecommand{\natexlab}[1]{#1}
\providecommand{\url}[1]{\texttt{#1}}
\expandafter\ifx\csname urlstyle\endcsname\relax
  \providecommand{\doi}[1]{doi: #1}\else
  \providecommand{\doi}{doi: \begingroup \urlstyle{rm}\Url}\fi

\bibitem[Abtahi et~al.(2018)Abtahi, Shea, Kulkarni, and
  Mohsenin]{abtahi2018accelerating}
Tahmid Abtahi, Colin Shea, Amey Kulkarni, and Tinoosh Mohsenin.
\newblock Accelerating convolutional neural network with {FFT} on embedded
  hardware.
\newblock \emph{IEEE Transactions on Very Large Scale Integration (VLSI)
  Systems}, 26\penalty0 (9):\penalty0 1737--1749, 2018.

\bibitem[Allen-Zhu et~al.(2019)Allen-Zhu, Li, and Song]{allen2018convergence}
Zeyuan Allen-Zhu, Yuanzhi Li, and Zhao Song.
\newblock A convergence theory for deep learning via over-parameterization.
\newblock \emph{International Conference on Machine Learning}, 2019.

\bibitem[Anwar et~al.(2017)Anwar, Hwang, and Sung]{anwar2017structured}
Sajid Anwar, Kyuyeon Hwang, and Wonyong Sung.
\newblock Structured pruning of deep convolutional neural networks.
\newblock \emph{ACM Journal on Emerging Technologies in Computing Systems
  (JETC)}, 13\penalty0 (3):\penalty0 32, 2017.

\bibitem[Bruna and Mallat(2013)]{bruna2013invariant}
Joan Bruna and St{\'e}phane Mallat.
\newblock Invariant scattering convolution networks.
\newblock \emph{IEEE transactions on pattern analysis and machine
  intelligence}, 35\penalty0 (8):\penalty0 1872--1886, 2013.

\bibitem[Chen et~al.(2018)Chen, Choi, Brand, Agrawal, Zhang, and
  Gopalakrishnan]{chen2018adacomp}
Chia-Yu Chen, Jungwook Choi, Daniel Brand, Ankur Agrawal, Wei Zhang, and
  Kailash Gopalakrishnan.
\newblock Adacomp: Adaptive residual gradient compression for data-parallel
  distributed training.
\newblock In \emph{Thirty-Second AAAI Conference on Artificial Intelligence},
  2018.

\bibitem[Chen et~al.(2014)Chen, Du, Sun, Wang, Wu, Chen, and Temam]{Diannao}
Tianshi Chen, Zidong Du, Ninghui Sun, Jia Wang, Chengyong Wu, Yunji Chen, and
  Olivier Temam.
\newblock Diannao: A small-footprint high-throughput accelerator for ubiquitous
  machine-learning.
\newblock In \emph{Proceedings of the 19th International Conference on
  Architectural Support for Programming Languages and Operating Systems},
  ASPLOS '14, pages 269--284, New York, NY, USA, 2014. ACM.
\newblock ISBN 978-1-4503-2305-5.
\newblock \doi{10.1145/2541940.2541967}.
\newblock URL \url{http://doi.acm.org/10.1145/2541940.2541967}.

\bibitem[Chen et~al.(2016)Chen, Emer, and Sze]{eyeriss}
Yu-Hsin Chen, Joel Emer, and Vivienne Sze.
\newblock Eyeriss: A spatial architecture for energy-efficient dataflow for
  convolutional neural networks.
\newblock In \emph{Proceedings of the 43rd International Symposium on Computer
  Architecture}, ISCA '16, pages 367--379, Piscataway, NJ, USA, 2016. IEEE
  Press.
\newblock ISBN 978-1-4673-8947-1.
\newblock \doi{10.1109/ISCA.2016.40}.
\newblock URL \url{https://doi.org/10.1109/ISCA.2016.40}.

\bibitem[Ding et~al.(2017)Ding, Liao, Wang, Li, Liu, Zhuo, Wang, Qian, Bai,
  Yuan, et~al.]{ding2017c}
Caiwen Ding, Siyu Liao, Yanzhi Wang, Zhe Li, Ning Liu, Youwei Zhuo, Chao Wang,
  Xuehai Qian, Yu~Bai, Geng Yuan, et~al.
\newblock C ir cnn: accelerating and compressing deep neural networks using
  block-circulant weight matrices.
\newblock In \emph{Proceedings of the 50th Annual IEEE/ACM International
  Symposium on Microarchitecture}, pages 395--408. ACM, 2017.

\bibitem[Freeman and Adelson(1991)]{freeman1991design}
William~T. Freeman and Edward~H Adelson.
\newblock The design and use of steerable filters.
\newblock \emph{IEEE Transactions on Pattern Analysis \& Machine Intelligence},
  9:\penalty0 891--906, 1991.

\bibitem[Han et~al.(2015)Han, Pool, Tran, and Dally]{han2015learning}
Song Han, Jeff Pool, John Tran, and William Dally.
\newblock Learning both weights and connections for efficient neural network.
\newblock In \emph{Advances in neural information processing systems}, pages
  1135--1143, 2015.

\bibitem[He et~al.(2016{\natexlab{a}})He, Zhang, Ren, and Sun]{he2016deep}
Kaiming He, Xiangyu Zhang, Shaoqing Ren, and Jian Sun.
\newblock Deep residual learning for image recognition.
\newblock In \emph{Proceedings of the IEEE conference on computer vision and
  pattern recognition}, pages 770--778, 2016{\natexlab{a}}.

\bibitem[He et~al.(2016{\natexlab{b}})He, Zhang, Ren, and Sun]{he2016identity}
Kaiming He, Xiangyu Zhang, Shaoqing Ren, and Jian Sun.
\newblock Identity mappings in deep residual networks.
\newblock In \emph{European conference on computer vision}, pages 630--645.
  Springer, 2016{\natexlab{b}}.

\bibitem[Huang et~al.(2017)Huang, Liu, Van Der~Maaten, and
  Weinberger]{huang2017densely}
Gao Huang, Zhuang Liu, Laurens Van Der~Maaten, and Kilian~Q Weinberger.
\newblock Densely connected convolutional networks.
\newblock In \emph{Proceedings of the IEEE conference on computer vision and
  pattern recognition}, pages 4700--4708, 2017.

\bibitem[Hubara et~al.(2017)Hubara, Courbariaux, Soudry, El-Yaniv, and
  Bengio]{hubara2017quantized}
Itay Hubara, Matthieu Courbariaux, Daniel Soudry, Ran El-Yaniv, and Yoshua
  Bengio.
\newblock Quantized neural networks: Training neural networks with low
  precision weights and activations.
\newblock \emph{The Journal of Machine Learning Research}, 18\penalty0
  (1):\penalty0 6869--6898, 2017.

\bibitem[Jacobsen et~al.(2016)Jacobsen, van Gemert, Lou, and
  Smeulders]{jacobsen2016structured}
Jorn-Henrik Jacobsen, Jan van Gemert, Zhongyu Lou, and Arnold~WM Smeulders.
\newblock Structured receptive fields in {CNNs}.
\newblock In \emph{Proceedings of the IEEE Conference on Computer Vision and
  Pattern Recognition}, pages 2610--2619, 2016.

\bibitem[Jaderberg et~al.(2014)Jaderberg, Vedaldi, and
  Zisserman]{jaderberg2014speeding}
Max Jaderberg, Andrea Vedaldi, and Andrew Zisserman.
\newblock Speeding up convolutional neural networks with low rank expansions.
\newblock In \emph{Proceedings of the British Machine Vision Conference. BMVA
  Press}, 2014.

\bibitem[Kone{\v{c}}n{\`y} et~al.(2016)Kone{\v{c}}n{\`y}, McMahan, Yu,
  Richt{\'a}rik, Suresh, and Bacon]{konevcny2016federated}
Jakub Kone{\v{c}}n{\`y}, H~Brendan McMahan, Felix~X Yu, Peter Richt{\'a}rik,
  Ananda~Theertha Suresh, and Dave Bacon.
\newblock Federated learning: Strategies for improving communication
  efficiency.
\newblock \emph{arXiv preprint arXiv:1610.05492}, 2016.

\bibitem[K\"{o}ster et~al.(2017)K\"{o}ster, Webb, Wang, Nassar, Bansal,
  Constable, Elibol, Gray, Hall, Hornof, Khosrowshahi, Kloss, Pai, and
  Rao]{Flexpoint}
Urs K\"{o}ster, Tristan Webb, Xin Wang, Marcel Nassar, Arjun~K Bansal, William
  Constable, Oguz Elibol, Scott Gray, Stewart Hall, Luke Hornof, Amir
  Khosrowshahi, Carey Kloss, Ruby~J Pai, and Naveen Rao.
\newblock Flexpoint: An adaptive numerical format for efficient training of
  deep neural networks.
\newblock In I.~Guyon, U.~V. Luxburg, S.~Bengio, H.~Wallach, R.~Fergus,
  S.~Vishwanathan, and R.~Garnett, editors, \emph{Advances in Neural
  Information Processing Systems 30}, pages 1742--1752. Curran Associates,
  Inc., 2017.

\bibitem[Lavin and Gray(2016)]{lavin2016fast}
Andrew Lavin and Scott Gray.
\newblock Fast algorithms for convolutional neural networks.
\newblock In \emph{Proceedings of the IEEE Conference on Computer Vision and
  Pattern Recognition}, pages 4013--4021, 2016.

\bibitem[Li et~al.(2016)Li, Kadav, Durdanovic, Samet, and Graf]{li2016pruning}
Hao Li, Asim Kadav, Igor Durdanovic, Hanan Samet, and Hans~Peter Graf.
\newblock Pruning filters for efficient convnets.
\newblock \emph{arXiv preprint arXiv:1608.08710}, 2016.

\bibitem[Li et~al.(2014)Li, Andersen, Park, Smola, Ahmed, Josifovski, Long,
  Shekita, and Su]{li2014scaling}
Mu~Li, David~G Andersen, Jun~Woo Park, Alexander~J Smola, Amr Ahmed, Vanja
  Josifovski, James Long, Eugene~J Shekita, and Bor-Yiing Su.
\newblock Scaling distributed machine learning with the parameter server.
\newblock In \emph{11th $\{$USENIX$\}$ Symposium on Operating Systems Design
  and Implementation ($\{$OSDI$\}$ 14)}, pages 583--598, 2014.

\bibitem[Lin et~al.(2018)Lin, Han, Mao, Wang, and Dally]{lin2018deep}
Yujun Lin, Song Han, Huizi Mao, Yu~Wang, and Bill Dally.
\newblock Deep gradient compression: Reducing the communication bandwidth for
  distributed training.
\newblock In \emph{International Conference on Learning Representations}, 2018.
\newblock URL \url{https://openreview.net/forum?id=SkhQHMW0W}.

\bibitem[Mallat(2012)]{mallat2012group}
St{\'e}phane Mallat.
\newblock Group invariant scattering.
\newblock \emph{Communications on Pure and Applied Mathematics}, 65\penalty0
  (10):\penalty0 1331--1398, 2012.

\bibitem[Mamalet and Garcia(2012)]{mamalet2012simplifying}
Franck Mamalet and Christophe Garcia.
\newblock Simplifying convnets for fast learning.
\newblock In \emph{International Conference on Artificial Neural Networks},
  pages 58--65. Springer, 2012.

\bibitem[Recht et~al.(2011)Recht, Re, Wright, and Niu]{recht2011hogwild}
Benjamin Recht, Christopher Re, Stephen Wright, and Feng Niu.
\newblock Hogwild: A lock-free approach to parallelizing stochastic gradient
  descent.
\newblock In \emph{Advances in neural information processing systems}, pages
  693--701, 2011.

\bibitem[Rigamonti et~al.(2013)Rigamonti, Sironi, Lepetit, and
  Fua]{rigamonti2013learning}
Roberto Rigamonti, Amos Sironi, Vincent Lepetit, and Pascal Fua.
\newblock Learning separable filters.
\newblock In \emph{Proceedings of the IEEE conference on computer vision and
  pattern recognition}, pages 2754--2761, 2013.

\bibitem[Sifre and Mallat(2013)]{sifre2013rotation}
Laurent Sifre and St{\'e}phane Mallat.
\newblock Rotation, scaling and deformation invariant scattering for texture
  discrimination.
\newblock In \emph{Proceedings of the IEEE conference on computer vision and
  pattern recognition}, pages 1233--1240, 2013.

\bibitem[Tai et~al.(2015)Tai, Xiao, Zhang, Wang, et~al.]{tai2015convolutional}
Cheng Tai, Tong Xiao, Yi~Zhang, Xiaogang Wang, et~al.
\newblock Convolutional neural networks with low-rank regularization.
\newblock \emph{arXiv preprint arXiv:1511.06067}, 2015.

\bibitem[Veldhuizen(2010)]{veldhuizen2010measures}
Todd Veldhuizen.
\newblock Measures of image quality.
\newblock \emph{CVonline: The Evolving, Distributed, Non-Proprietary, On-Line
  Compendium of Computer Vision}, 2010.

\bibitem[Wang et~al.(2018)Wang, Sievert, Liu, Charles, Papailiopoulos, and
  Wright]{wang2018atomo}
Hongyi Wang, Scott Sievert, Shengchao Liu, Zachary Charles, Dimitris
  Papailiopoulos, and Stephen Wright.
\newblock Atomo: Communication-efficient learning via atomic sparsification.
\newblock In \emph{Advances in Neural Information Processing Systems}, pages
  9850--9861, 2018.

\bibitem[Wangni et~al.(2018)Wangni, Wang, Liu, and Zhang]{NIPS2018_7405}
Jianqiao Wangni, Jialei Wang, Ji~Liu, and Tong Zhang.
\newblock Gradient sparsification for communication-efficient distributed
  optimization.
\newblock In S.~Bengio, H.~Wallach, H.~Larochelle, K.~Grauman, N.~Cesa-Bianchi,
  and R.~Garnett, editors, \emph{Advances in Neural Information Processing
  Systems 31}, pages 1299--1309. Curran Associates, Inc., 2018.
\newblock URL
  \url{http://papers.nips.cc/paper/7405-gradient-sparsification-for-communication-efficient-distributed-optimization.pdf}.

\bibitem[Watson(1994)]{watson1994image}
Andrew~B Watson.
\newblock Image compression using the discrete cosine transform.
\newblock \emph{Mathematica journal}, 4\penalty0 (1):\penalty0 81, 1994.

\bibitem[Welstead(1999)]{welstead1999fractal}
Stephen~T Welstead.
\newblock \emph{Fractal and wavelet image compression techniques}.
\newblock SPIE Optical Engineering Press Bellingham, Washington, 1999.

\bibitem[Winograd(1980)]{winograd1980arithmetic}
Shmuel Winograd.
\newblock \emph{Arithmetic complexity of computations}, volume~33.
\newblock Siam, 1980.

\bibitem[Xiao et~al.(2018)Xiao, Bahri, Sohl-Dickstein, Schoenholz, and
  Pennington]{pmlr-v80-xiao18a}
Lechao Xiao, Yasaman Bahri, Jascha Sohl-Dickstein, Samuel Schoenholz, and
  Jeffrey Pennington.
\newblock Dynamical isometry and a mean field theory of {CNN}s: How to train
  10,000-layer vanilla convolutional neural networks.
\newblock In Jennifer Dy and Andreas Krause, editors, \emph{Proceedings of the
  35th International Conference on Machine Learning}, volume~80 of
  \emph{Proceedings of Machine Learning Research}, pages 5393--5402,
  Stockholmsmässan, Stockholm Sweden, 10--15 Jul 2018. PMLR.
\newblock URL \url{http://proceedings.mlr.press/v80/xiao18a.html}.

\bibitem[Ye and Abbe(2018)]{pmlr-v80-ye18a}
Min Ye and Emmanuel Abbe.
\newblock Communication-computation efficient gradient coding.
\newblock In Jennifer Dy and Andreas Krause, editors, \emph{Proceedings of the
  35th International Conference on Machine Learning}, volume~80 of
  \emph{Proceedings of Machine Learning Research}, pages 5610--5619,
  Stockholmsmässan, Stockholm Sweden, 10--15 Jul 2018. PMLR.
\newblock URL \url{http://proceedings.mlr.press/v80/ye18a.html}.

\bibitem[Zagoruyko and Komodakis(2016)]{zagoruyko2016wide}
Sergey Zagoruyko and Nikos Komodakis.
\newblock Wide residual networks.
\newblock \emph{arXiv preprint arXiv:1605.07146}, 2016.

\end{thebibliography}
\raggedbottom
\pagebreak

\appendix
\section{Estimating the Cardinality of the Code Vector Space}\label{App:Estimate}
In this appendix we make the statements in Section~\ref{Sec:Estimate} more precise.  
We first take the 4-D DCT-II of each slice defined in Section~\ref{Sec:Estimate}.
The 4-D DCT-II that we use, after removing the scaling factors, is stated as follows.

\begin{align}
    &K[u,v,w,t] := \sum_{i=0}^{\hat{s}_1-1}\sum_{j=0}^{\hat{s}_2-1}\sum_{k=0}^{\hat{s}_3-1}\sum_{l=0}^{\hat{s}_4-1} k[i,j,k,l] \nonumber \\ 
    & \ \ \ \ \ \ \ \ \ \    \cos{\left(\frac{\pi}{\hat{s}_1}\left(i+\frac{1}{2}\right)u\right)}
    \cos{\left(\frac{\pi}{\hat{s}_2}\left(j+\frac{1}{2}\right)v\right)}
    \cos{\left(\frac{\pi}{\hat{s}_3}\left(k+\frac{1}{2}\right)w\right)}
    \cos{\left(\frac{\pi}{\hat{s}_4}\left(l+\frac{1}{2}\right)t\right)}
\end{align}

After taking the 4-D DCT-II, we then remov the elements of the slice in the transformed domain that were smaller than a threshold. We then took the inverse transform.
The inverse 4-D DCT transform, after neglecting its scaling factors, can be stated as follows.

\begin{align}
    &k[i,j,k,l] := \sum_{u=0}^{\hat{s}_1-1}\sum_{v=0}^{\hat{s}_2-1}\sum_{w=0}^{\hat{s}_3-1}\sum_{t=0}^{\hat{s}_4-1} K[u,v,w,t] \nonumber \\ 
    & \ \ \ \ \ \ \ \ \ \    \cos{\left(\frac{\pi}{\hat{s}_1}\left(u+\frac{1}{2}\right)i\right)}
    \cos{\left(\frac{\pi}{\hat{s}_2}\left(v+\frac{1}{2}\right)j\right)}
    \cos{\left(\frac{\pi}{\hat{s}_3}\left(w+\frac{1}{2}\right)k\right)}
    \cos{\left(\frac{\pi}{\hat{s}_4}\left(t+\frac{1}{2}\right)l\right)}
\end{align}

In order to measure the similarity between the inverse transformed version of the slice and the original slice, inspired by image compression similarity measures, we use a variation of a known measure called PSNR \cite{welstead1999fractal} which we define as follows. Let $\hat{k^*}$ denote the inverse DCT of the pruned DCT of the slice $\Hat{k}$. We re-scale the elements of the slices and their corresponding approximate version to $[0,1]$ with a bit of abuse of notation we represent the re-scaled versions with the same notations.  
\begin{align}
    PSNR^* = 10\log \frac{1^2}{MSE}, 
\end{align}
where
\begin{align}
    MSE = \frac{1}{\hat{s}_1\hat{s}_2\hat{s}_3\hat{s}_4}||\hat{k}-\hat{k}^*||^2_2.
\end{align}
We chose the threshold for keeping the elements in the DCT domain such that the average PSNR* is above $20$dB which from image compression literature is expected to result in images that are still recognizable (see, for instance \cite{veldhuizen2010measures}).
We then calculated the mean of the number of remaining elements in the DCT domain after the pruning step.
This suggests that a code size of 20 times fewer elements than its corresponding slice would be sufficient. Based on these estimates, in most of our experiments we choose code vectors whose number of elements is 18 times smaller than that of their corresponding slices.

\section{Training on CIFAR-10 Dataset}
The training and test error of different models for CIFAR-10 dataset and their CSG-augmented versions reported in Table \ref{Tbl:Results}, but not included in Figure \ref{Fig:ResDense-errorbars} are provided in figures \ref{Fig:densenet48-apdx}-\ref{Fig:resnet-apdx}.

\begin{figure}[H]
  \centering
  \includegraphics[scale=.6]{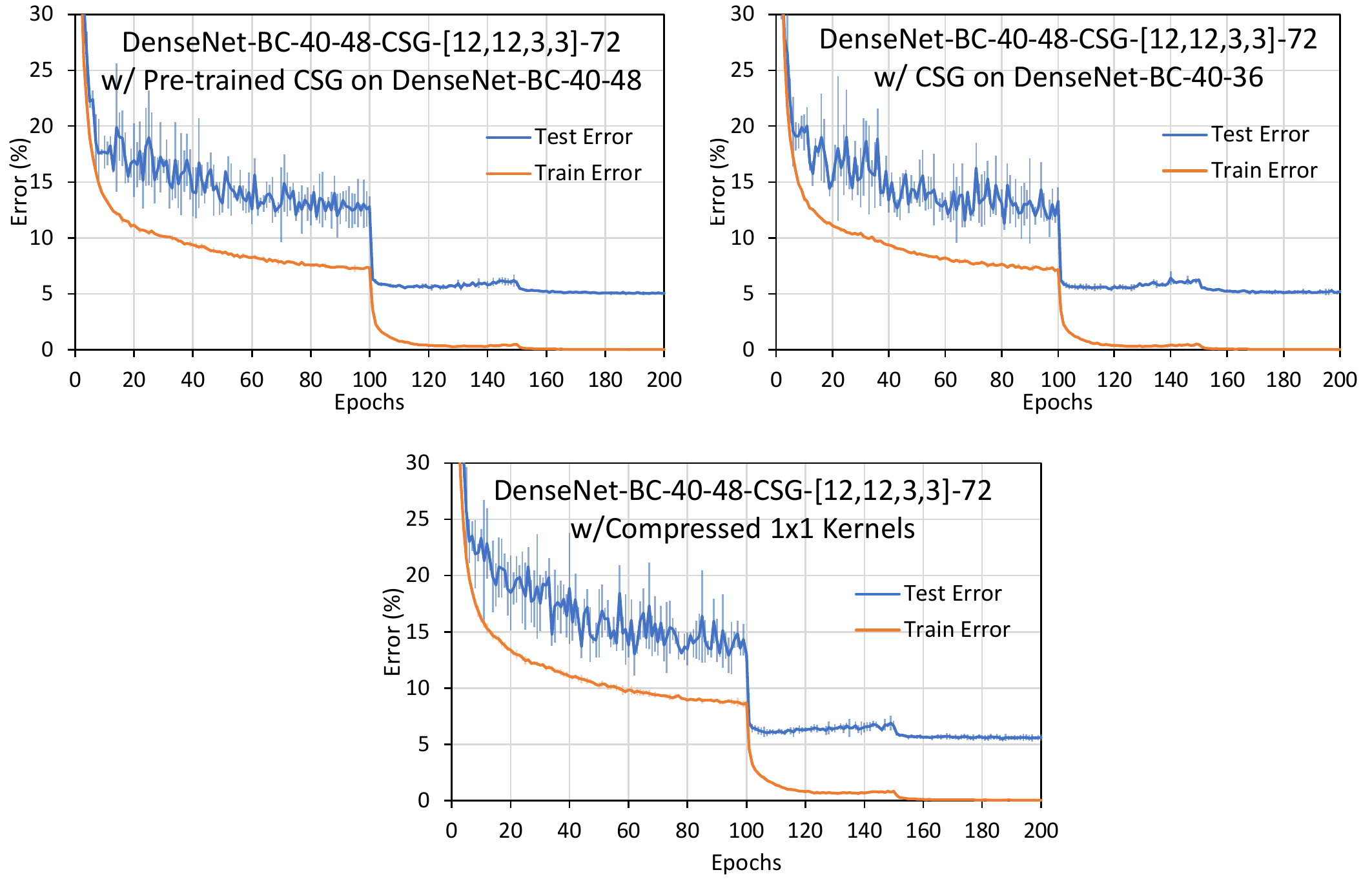}
  \caption{Training and test error for the CSG-augmented versions of DenseNet-BC-40-48 reported in Table \ref{Tbl:Results} not included in Figure \ref{Fig:ResDense-errorbars} over the course of 200 epochs.}
  \label{Fig:densenet48-apdx}
\end{figure} 

\begin{figure}[H]
  \centering
  \includegraphics[scale=.6]{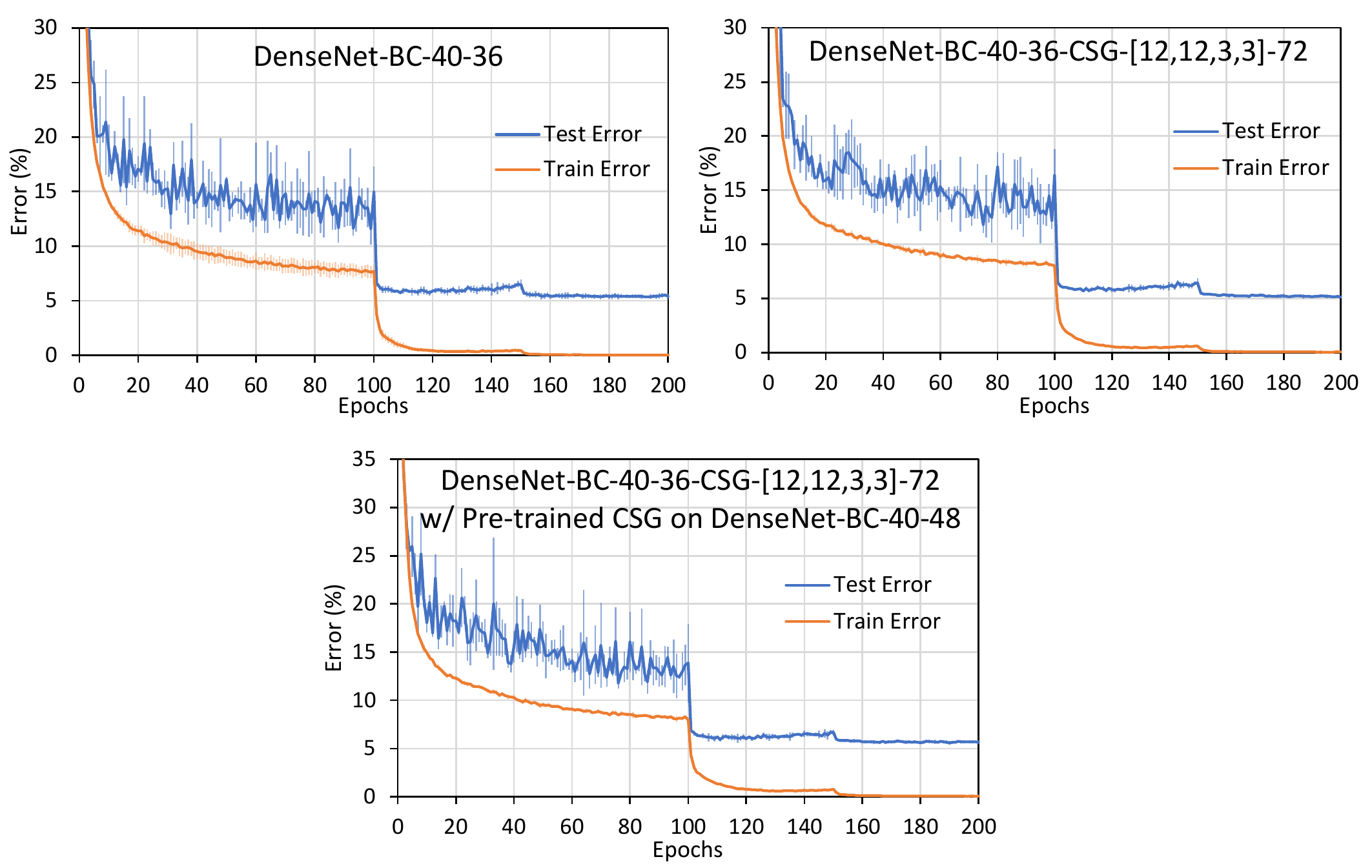}
  \caption{Training and validation error for DenseNet-BC-40-36 and its CSG-augmented versions over the course of 200 epochs.}\label{Fig:densenet36-apdx}
\end{figure} 

\begin{figure}[H]
  \centering
  \includegraphics[scale=.6]{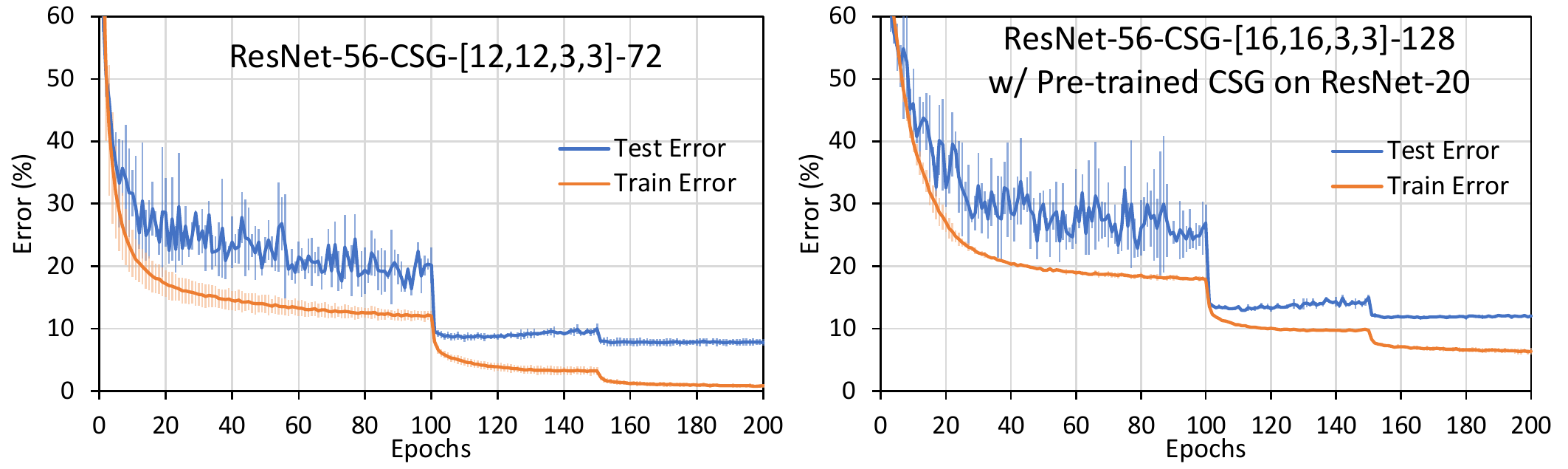}
  \caption{Training and test error for the CSG-augmented versions of DenseNet-BC-40-48 reported in Table \ref{Tbl:Results} not included in Figure \ref{Fig:ResDense-errorbars} over the course of 200 epochs.
  }\label{Fig:resnet-apdx}
\end{figure} 

\section{Training Convergence}\label{App:Proof2}
In this section of the appendix we provide the proof of Theorem~\ref{Thrm:cnn-CSG}.
\begin{proof}[Proof of Theorem~\ref{Thrm:cnn-CSG}]
First of all we note that since the number of weights in the convolutional layer is a polynomial function of $|\hat{\mathcal{C}}|$, it has replaced the $m$ in Theorem~\ref{Thrm:cnn}. Now, let 
\begin{align}
C = \left[c_1,...,c_{\hat{|\mathcal{C}|}}\right],
\end{align}
denote a matrix whose columns are the code vectors corresponding to the slices of the convolutional layer. Now, instead of assuming that the convolutional filter is first generated and then it is used for the convolution operation, equivalently, using associativity, we can assume that each column of the matrix $A_{CSG}$ denotes a vectorized version of a slice of a convolutional filter. It means that, for each column of $A_{CSG}$, we need to calculate the convolution of a slice for its $|\hat{\mathcal{C}}|$ possible locations in the filter. But each of these would be an ordinary convolution with appropriate zero-paddings. Now, the matrix $C$ can be viewed as an additional fully connected layer before the final classification layers. Hence we are dealing with a CNN with an additional fully connected layer at the final stage for which the results in~\cite{allen2018convergence} and specially Theorem~\ref{Thrm:cnn} holds.
\end{proof}

\end{document}